\title[Verification of Neural Reachable Tubes]{Verification of Neural Reachable Tubes via Scenario Optimization and Conformal Prediction}
\newcommand{\tvar}{t}
\newcommand{\tdummy}{\tau}
\newcommand{\R}{\mathbb{R}}
\newcommand{\ctrl}{u}
\newcommand{\cfunc}{u(\cdot)}
\newcommand{\cset}{\mathcal{U}}
\newcommand{\state}{x}
\newcommand{\traj}{\xi} 
\newcommand{\dyn}{f} 
\newcommand{\targetfunc}{l}
\newcommand{\targetset}{\mathcal{L}}
\newcommand{\safeset}{\mathcal{S}}
\newcommand{\BRT}{\text{BRT}}
\newcommand{\policy}{\pi}
\newcommand{\costfunctional}{J}
\newcommand{\vfunc}{V}
\newcommand{\trajstandard}{\traj_{\state,\tvar}^{\ctrl}}
\newcommand{\trajinittime}{\traj_{\state,0}^{\ctrl}}
\newcommand{\veh}{Q}
\newcommand{\horizon}{T}
\newcommand{\safetyMetric}{\delta_{\Tilde{\vfunc},\Tilde{\policy}}}
\newcommand{\costFunction}{J_{\Tilde{\policy}}}
\newcommand{\learnedCostFunction}{\Tilde{J}_{\Tilde{\policy}}}
\begin{document}

\maketitle

\begin{abstract}%
 Learning-based approaches for controlling safety-critical autonomous systems are rapidly growing in popularity; thus, it is important to provide rigorous and robust assurances on their performance and safety.
 Hamilton-Jacobi (HJ) reachability analysis is a popular formal verification tool for providing such guarantees, since it can handle general nonlinear system dynamics, bounded adversarial system disturbances, and state and input constraints.
 However, it involves solving a Partial Differential Equation (PDE), whose computational and memory complexity scales exponentially with respect to the state dimension, making its direct use on large-scale systems intractable.
 To overcome this challenge, neural approaches, such as DeepReach, have been used to synthesize reachable tubes and safety controllers for high-dimensional systems.
 However, verifying these neural reachable tubes remains challenging.
 In this work, we propose two different verification methods, based on robust scenario optimization and conformal prediction, to provide probabilistic safety guarantees for neural reachable tubes.
 Our methods allow a direct trade-off between resilience to outlier errors in the neural tube, which are inevitable in a learning-based approach, and the strength of the probabilistic safety guarantee.
 Furthermore, we show that split conformal prediction, a widely used method in the machine learning community for uncertainty quantification, reduces to a scenario-based approach, making the two methods equivalent not only for verification of neural reachable tubes but also more generally.
 To our knowledge, our proof is the first in the literature to show a strong relationship between the highly related but disparate fields of conformal prediction and scenario optimization.
 Finally, we propose an outlier-adjusted verification approach that harnesses information about the error distribution in neural reachable tubes to recover greater safe volumes.
 We demonstrate the efficacy of the proposed approaches for the high-dimensional problems of multi-vehicle collision avoidance and rocket landing with no-go zones.%
\end{abstract}

\begin{keywords}%
  Probabilistic Safety Guarantees, Safety-Critical Learning, Neural Certificates, Hamilton-Jacobi Reachability Analysis, Scenario Optimization, Conformal Prediction%
\end{keywords}

\section{Introduction}

It is important to design provably safe controllers for autonomous systems. 
Hamilton-Jacobi (HJ) reachability analysis provides a powerful framework to design such controllers for general nonlinear dynamical systems \citep{lygeros2004reachability, mitchell2005time}.
In reachability analysis, safety is characterized by the system's \textit{Backward Reachable Tube (BRT)}.
This is the set of states from which trajectories will eventually reach a given target set despite the best control effort.
Thus, if the target set represents undesirable states, the BRT represents unsafe states and should be avoided. 
Along with the BRT, reachability analysis provides a safety controller to keep the system outside the BRT. 

Traditionally, the BRT computation in HJ reachability is formulated as an optimal control problem.
The BRT can then be obtained as a sub-zero level solution of the corresponding value function.
Obtaining the value function requires solving a partial differential equation (PDE) over a state-space grid, resulting in an exponentially scaling computation complexity with the number of states \citep{bansal2017hamilton}.
To overcome this challenge, a variety of solutions have been proposed that trade off between the class of dynamics
they can handle, the approximation quality of the BRT,
and the required computation.
These include specialized methods for linear and affine dynamics \citep{10.1007/3-540-64358-3_38, Frehse2011, Kurzhanski00, Kurzhanski02, Maidens13, girard2005reachability, althoff2010computing, bak2019numerical, Nilsson2016}, polynomial dynamics \citep{doi:10.1177/0278364914528059, majumdar2017funnel, Dreossi16, henrion2014convex}, monotonic dynamics \citep{coogan2015efficient}, and convex dynamics \citep{chow2017algorithm}
(see \citet{bansal2017hamilton, bansal2021deepreach} for a survey).

Owing to the success of deep learning, there has also been a surge of interest in approximating high-dimensional BRTs \citep{rubies2019classification, fisac2019bridging, djeridane2006neural, niarchos2006neural, darbon2020overcoming} and optimal controllers \citep{onken2022neural} through deep neural networks (DNNs).
Building upon this line of work, \citet{bansal2021deepreach} have proposed DeepReach -- a toolbox that leverages recent advances in neural implicit representations and neural PDE solvers to compute a value function and a safety controller for high-dimensional systems.
Compared to the aforementioned methods, DeepReach can handle general nonlinear dynamics, the presence of exogenous disturbances, as well as state and input constraints during the BRT computation.
Consequently, methods for verifying neural reachable tubes have been proposed. For example, \citet{lin2023generating} propose an iterative scenario-based method \citep{campi2009scenario} to recover probabilistically safe reachable tubes from DeepReach solutions up to a desired confidence level and bound on violation rate. Unfortunately, the method does not allow an after-the-fact risk-return trade-off, and as a result, it is highly sensitive to outlier errors in the learned solutions. This can lead to highly conservative reachable tubes and a severe loss of recovery in the case of stringent safety requirements, as we demonstrate in our case studies.

In this work, we propose two different verification methods, one based on robust scenario optimization and the other based on conformal prediction, to provide probabilistic safety guarantees for neural reachable tubes.
Both methods are resilient to the outlier errors in neural reachable tubes and automatically trade-off the strength of the probabilistic safety guarantees based on the outlier rate.
The proposed methods can evaluate any candidate tube and are not restricted to a specific class of system dynamics or value functions.
We further prove that these seemingly different verification methods naturally reduce to one another, providing a unifying viewpoint for uncertainty quantification (typical use case of conformal prediction) and error optimization (typical use case of scenario optimization) in neural reachable tubes.
Based on these insights, we propose an outlier-adjusted verification approach that can recover a greater safe volume from a neural reachable tube by harnessing information about the distribution of error in the learned solution.
To summarize, the key contributions of this paper are:
\begin{itemize}
    \item probabilistic safety verification methods for neural reachable tubes that enable a direct trade-off between resilience and the probabilistic strength of safety,
    \item a proof that split conformal prediction reduces to a scenario-based approach in general, demonstrating a strong relationship between the two highly related but disparate fields,
    \item an outlier-adjusted verification approach that recovers greater safe volumes from tubes, and
    \item a demonstration of the proposed approaches for the high-dimensional problems of multi-vehicle collision avoidance and rocket landing with no-go zones.
\end{itemize}
\section{Problem Setup} \label{sec:problem_setup}
\vspace{-0.5em}
Consider a dynamical system with state $\state \in X \subseteq \R^n$, control $\ctrl \in \cset$, and dynamics $\dot{\state} = \dyn(\state, \ctrl)$ governing how $\state$ evolves over time until a final time $\horizon$. 
Let $\trajstandard(\tdummy)$ denote the state achieved at time $\tdummy \in [\tvar, \horizon]$ by starting at initial state $\state$ and time $\tvar$ and applying control $\cfunc$ over $[\tvar,\tau]$. 
Let $\targetset$ represent a target set that the agent wants to either reach (e.g. goal states) or avoid (e.g. obstacles).
\vspace{0.2em}

\noindent \textit{\textbf{Running example: Multi-Vehicle Collision Avoidance.}}
Consider a 9D multi-vehicle collision avoidance system with 3 independent Dubins3D cars: $\veh_1, \veh_2, \veh_3$. 
$\veh_i$ has position $(p_{xi}, p_{yi})$, heading $\theta_i$, constant velocity $v$, and steering control $u_i \in [u_{\min}, u_{\max}]$. 
The dynamics of $\veh_i$ are: $\dot{p}_{xi} = v\cos{\theta_i}, \quad \dot{p}_{yi} = v\sin{\theta_i}, \quad \dot{\theta_i} = u_i$.
$\targetset$ is the set of states where any of the vehicle pairs is in collision: $\targetset = \{x: \min\{d(\veh_1, \veh_2), d(\veh_1, \veh_3), d(\veh_2, \veh_3)\} \le R\}$, where $d(\veh_i, \veh_j)$ is the distance between $\veh_i$ and $\veh_j$.
We set: $v=0.6, \quad u_{\min}=-1.1, \quad u_{\max}=1.1, \quad R=0.25$.

In this setting, we are interested in computing the system's initial-time Backward Reachable Tube, which we denote as $\BRT$. 
We define $\BRT$ as the set of all initial states in $X$ from which the agent will eventually reach $\targetset$ within the time horizon $[0,\horizon]$, despite best control efforts: $\BRT = \{\state: \state\in X, \forall \ctrl(\cdot), \exists \tdummy \in [0, \horizon], \trajinittime(\tdummy) \in \targetset\}$.
When $\targetset$ represents unsafe states for the system, as it does in our running example, staying outside of $\BRT$ is desirable. 
When $\targetset$ instead represents the states that the agent wants to reach, $\BRT$ is defined as the set of all initial states in $X$ from which the agent, acting optimally, can eventually reach $\targetset$ within $[0,\horizon]$. Thus, staying within $\BRT$ is desirable.

The above 9D system is intractable for traditional grid-based methods, motivating the use of DeepReach to learn a neural $\BRT$.
Our goal in this work is to recover an approximation of the safe set with probabilistic guarantees.
Specifically, we want to find $\safeset$ such that $\underset{\state \in \safeset}{\mathbb{P}} ( \state \in \BRT ) \le \epsilon$ for some violation parameter $\epsilon \in (0, 1)$.
When $\targetset$ represents goal states, we want $\underset{\state \in \safeset}{\mathbb{P}} ( \state \in \BRT^C ) \le \epsilon$.
\vspace{-1.5em}
\section{Background: Hamilton-Jacobi Reachability, DeepReach, and Safety Verification} \label{sec:hj_reachability}
\vspace{-0.5em}
Here, we provide a quick overview of Hamilton-Jacobi reachability analysis, a specific toolbox, DeepReach, to compute high-dimensional neural reachable tubes, and an iterative scenario-based method for recovering probabilistically safe tubes from learning-based methods like DeepReach.

\vspace{-0.5em}
\subsection{Hamilton-Jacobi (HJ) Reachability} \label{sec:background_reachability}
\vspace{-0.5em}
In HJ reachability, computing $\BRT$ is formulated as an optimal control problem. We will explain it in the context of $\targetset$ being a set of undesirable states. In the end, we will comment on when $\targetset$ is a set of desirable states and refer interested readers to \citet{bansal2017hamilton} for other cases.

We first define a target function $\targetfunc(\state)$ such that the sub-zero level of $\targetfunc(\state)$ yields $\targetset$: $\targetset = \{\state: \targetfunc(\state) \le 0\}$.
$\targetfunc(\state)$ is commonly a signed distance function to $\targetset$. For example, we can choose $\targetfunc(\state) = \min\{d(\veh_1, \veh_2), d(\veh_1, \veh_3), d(\veh_2, \veh_3)\}-R$ for our running example in \sectionref{sec:problem_setup}.
Next, we define the cost function of a state corresponding to some policy $\cfunc$ to be the minimum of $\targetfunc(\state)$ over its trajectory: $\costfunctional_{\cfunc}(\state,\tvar) = \min_{\tdummy \in [\tvar, \horizon]} \targetfunc(\trajstandard(\tdummy))$.
Since the system wants to avoid $\targetset$, our goal is to maximize $\costfunctional_{\cfunc}(\state,\tvar)$.  
Thus, the value function corresponding to this optimal control problem is:
\vspace{-0.5em}
\begin{equation}
    \label{eq:valuefunc}
    \vfunc(\state,\tvar) = \sup_{\cfunc} \costfunctional_{\cfunc}(\state,\tvar)
    \vspace{-0.5em}
\end{equation}
By defining our optimal control problem in this way, we can recover $\BRT$ using the value function. In particular, the value function being sub-zero implies that the target function is sub-zero somewhere along the optimal trajectory, or in other words, that the system has reached $\targetset$. Thus, $\BRT$ is given as the sub-zero level set of the value function at the initial time: $\BRT = \{\state: \state\in X, \vfunc(\state,0) \le 0 \}$.
The value function in Equation \eqref{eq:valuefunc} can be computed using dynamic programming, resulting in the following final value Hamilton-Jacobi-Bellman Variational Inequality (HJB-VI): $\min\Big\{D_\tvar \vfunc(\state,\tvar)+ H(\state,\tvar), \targetfunc(\state)-\vfunc(\state,\tvar)\Big\} = 0$, with the terminal value function $\vfunc(\state,\horizon) = \targetfunc(\state)$. 
$D_\tvar$ and $\nabla$ represent the time and spatial gradients of $\vfunc$. 
$H$ is the Hamiltonian that encodes the role of dynamics and the optimal control: $H(\state,\tvar) = \max_\ctrl \langle \nabla \vfunc(\state,\tvar), \dyn(\state,\ctrl)\rangle$.
The value function in Equation \eqref{eq:valuefunc} induces the optimal safety controller: $u^*(\state,\tvar) = \underset{\ctrl}{\arg\max} \langle \nabla \vfunc(\state,\tvar), \dyn(\state,\ctrl)\rangle$.
Intuitively, the safety controller aligns the system dynamics in the direction of the value function's gradient, thus steering the system towards higher-value states, i.e., away from $\targetset$.

We have just explained the case where $\targetset$ represents a set of undesirable states. When the system instead wants to reach $\targetset$, an infimum is used instead of a supremum in Equation $\eqref{eq:valuefunc}$.
The control wants to reach $\targetset$, hence there is a minimum instead of a maximum in the Hamiltonian and optimal safety controller equations. See \citet{bansal2017hamilton} for details on other reachability cases.

Traditionally, the value function is computed by solving the HJB-VI over a discretized grid in the state space. Unfortunately, doing so involves computation whose memory and time complexity scales exponentially with respect to the system dimension, making these methods practically intractable for high-dimensional systems, such as those beyond 5D. Fortunately, a deep learning approach, DeepReach, has been proposed to enable HJ reachability for high-dimensional systems. 

\vspace{-1em}
\subsection{DeepReach and an Iterative Scenario-Based Probabilistic Safety Verification Method} \label{sec:deepreach_iterative_verification}
\vspace{-0.5em}
Instead of solving the HJB-VI over a grid, DeepReach \citep{bansal2021deepreach} learns a parameterized approximation of the value function using a sinusoidal deep neural network (DNN). Thus, memory and complexity requirements for training scale with the value function complexity rather than the grid resolution, allowing it to obtain BRTs for high-dimensional systems.
DeepReach trains the DNN via self-supervision on the HJB-VI itself. 
Ultimately, it takes as input a state $\state$ and time $\tvar$, and it outputs a learned value function $\Tilde{\vfunc}(\state,\tvar)$.
$\Tilde{\vfunc}(\state,\tvar)$ also induces a corresponding safe policy $\Tilde{\policy}(\state,\tvar)$, as well as a $\BRT$ (referred to as the neural reachable tube from hereon).

However, the neural reachable tube will only be as accurate as $\Tilde{\vfunc}(\state,\tvar)$. To obtain a provably safe $\BRT$, \citet{lin2023generating} propose a uniform value correction bound which is defined, for the avoid case, as the maximum learned value of an unsafe state under the induced policy: $\safetyMetric \coloneqq \max_{x\in X}\{\Tilde{\vfunc}(\state,0): \costFunction(\state,0) \le 0\}$.
The authors show that the super-$\safetyMetric$ level set of $\Tilde{\vfunc}(\state,0)$ is provably safe under the policy $\Tilde{\policy}(\state,\tvar)$.
They also propose an iterative scenario-based probabilistic verification method for computing an approximation of $\safetyMetric$ from finite random samples that satisfies a desired confidence level and violation rate.
However, the method is sensitive to outlier errors in the neural reachable tube and can result in very conservative safe sets.
Specifically, it does not provide safety assurances for safe sets with nonzero empirical safety violations.

In this work, we propose probabilistic safety verification methods that allow nonzero empirical safety violations at the cost of the probabilistic strength of safety.
This enables a direct trade-off between resilience to outlier errors and the strength of the safety guarantee.
\vspace{-0.25em}
\begin{remark}
    Although we work with DeepReach solutions in particular for our problem setup, our proposed approaches can verify any general $\Tilde{\vfunc}(\state,\tvar)$ and $\Tilde{\policy}(\state,\tvar)$, regardless of whether DeepReach, a numerical PDE solver, or some other tool is used to obtain them.
\end{remark}
\vspace{-1.5em}
\section{Robust Scenario-Based Probabilistic Safety Verification Method}\label{sec:scenario-based_method}
\vspace{-0.5em}
Here, we propose a robust scenario-based probabilistic safety verification method for neural reachable tubes.
The new method is a straightforward application of a scenario-based sampling-and-discarding approach to chance-constrained optimization problems, which quantifies the trade-off between feasibility and performance of the optimal solution based on finite samples \citep{campi2011sampling}.
First, we explain the method when $\targetset$ represents undesirable states. In the end, we comment on when $\targetset$ represents desirable states. 

\noindent \textit{\textbf{Procedures:}}
Let $\safeset \subseteq X$ be a neural safe set that is, in the avoid case, the \textit{complement} of the neural reachable tube being verified.
In our case, $\safeset$ is typically a super-$\delta$ level set of the learned value function $\Tilde{\vfunc}(\state,0)$.
Ideally, any super-$\delta$ level set of $\Tilde{\vfunc}(\state,0)$ for $\delta > 0$ should be a valid safe set; however, due to learning errors, that might not be true in practice.
To provide a probabilistic safety assurance for $\safeset$, we first sample $N$ independent and identically distributed (i.i.d.) states $x_{1:N}$ from $\safeset$ according to some probability distribution $\mathbb{P}$ over $\safeset$.
Since $\safeset$ is defined implicitly by $\Tilde{\vfunc}(\state,0)$, we use rejection sampling.
We next compute the costs $\costFunction(\state_i,0)$ for $i=1, 2, ..., N$ by rolling out the system trajectory from $x_i$ under $\Tilde{\policy}(x,t)$.
Let $k$ refer to the number of ``outliers'' - samples that are empirically unsafe, i.e., $\costFunction(\state_i,0) \le 0$.
Then the following theorem provides a probabilistic guarantee on the safety of the neural reachable tube and its complement, the neural safe set $\safeset$: 
%
\vspace{-0.5em}
\begin{theorem}[Robust Scenario-Based Probabilistic Safety Verification]
    \label{theorem:scenario-based_method}
    Select a safety violation parameter $\epsilon \in (0, 1)$ and a confidence parameter $\beta \in (0, 1)$ such that
    \vspace{-0.5em}
    \begin{equation} \label{eq:theorem_parameters_relationship}
        \begin{aligned}
            \sum^k_{i=0} \binom{N}{i} \epsilon^i (1-\epsilon)^{N-i} \le \beta
        \end{aligned}
    \end{equation}
    where $k$ and $N$ are as defined above. 
    Then, with probability at least $1-\beta$, the following holds:
    \begin{equation} \label{eq:robust_guarantee}
        \begin{aligned}
            \underset{\state \in \safeset}{\mathbb{P}} \left( \vfunc(\state,0) \le 0 \right) \le \epsilon
        \end{aligned}
    \end{equation}
\end{theorem}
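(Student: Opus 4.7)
The plan is to reduce the claim to a classical one-sided binomial tail bound on the unknown empirical violation rate, with one short argument to bridge between $\vfunc$ and the learned policy cost $\costFunction$.

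First, I would introduce the two relevant probabilities: the \emph{target} rate $p^* \coloneqq \mathbb{P}_{\state \sim \mathbb{P}}(\vfunc(\state,0) \le 0)$ that appears in the conclusion, and the \emph{empirical} rate $p \coloneqq \mathbb{P}_{\state \sim \mathbb{P}}(\costFunction(\state,0) \le 0)$ that the samples actually measure. Since $\vfunc(\state,0) = \sup_{\cfunc} \costfunctional_{\cfunc}(\state,0) \ge \costFunction(\state,0)$, the implication $\vfunc(\state,0) \le 0 \Longrightarrow \costFunction(\state,0) \le 0$ is immediate, and hence $p^* \le p$. So it suffices to bound the probability (over the sampling) that $p > \epsilon$ while the theorem's binomial inequality still holds for the observed outlier count.

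Second, I would note that because the $N$ samples are i.i.d.\ from $\mathbb{P}$ and each indicator $\mathbf{1}\{\costFunction(\state_i,0) \le 0\}$ is a deterministic function of $\state_i$ (obtained by rolling out $\Tilde{\policy}$), the outlier count obeys $k \sim \text{Binomial}(N, p)$. Third, I would invoke the standard monotonicity property: for each fixed $k$ and $N$, the function $p \mapsto \sum_{i=0}^k \binom{N}{i} p^i (1-p)^{N-i}$ is monotonically decreasing in $p$, since larger true rates make small outlier counts less likely. Consequently, conditional on the bad event $p > \epsilon$, the probability of observing an outlier count low enough that the theorem's inequality is satisfied is no larger than its value at $p = \epsilon$, which by hypothesis is $\le \beta$. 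Chaining this with $p^* \le p$ gives the desired conclusion.

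The main subtle point, and the one I would state carefully, is the interpretation of ``with probability at least $1-\beta$'': since $\safeset$ and hence both $p^*$ and $p$ are deterministic given the learned value function, the outer probability must be construed as that of the i.i.d.\ sampling used to compute $k$. The guarantee then reads as a Type I error bound on the procedure that accepts $\safeset$ as $\epsilon$-safe whenever the binomial inequality holds at the observed $k$: such a procedure incorrectly accepts an $\safeset$ with true violation rate exceeding $\epsilon$ with probability at most $\beta$, uniformly over the unknown $p$. The reach case, where $\targetset$ is a goal set, follows symmetrically by swapping the sup and inf in the definition of $\vfunc$ and flipping the direction of the level-set inequalities.
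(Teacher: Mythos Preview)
Your argument is correct and takes a genuinely different route from the paper. The paper casts the problem as a one-dimensional chance-constrained program, invokes the sampling-and-discarding theorem of \citet{campi2011sampling} to bound $\mathbb{P}\bigl(-\costFunction(\state,0) > g^*_{N,k}\bigr)$, and then uses $g^*_{N,k}<0$ together with $\costFunction \le \vfunc$ to reach the conclusion. You instead give a self-contained binomial confidence argument: observe $k\sim\mathrm{Binomial}(N,p)$, use monotonicity of the lower-tail CDF in $p$, and read off the Type~I error bound directly. Your route is more elementary and avoids importing the scenario-optimization machinery; the paper's route has the advantage that the same \lemmaref{lemma:CCP} is reused verbatim in \appendixref{apd:reduction_proof} to establish the equivalence with split conformal prediction, so the detour through Campi's framework pays for itself later. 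Your careful discussion of how to interpret the outer ``with probability at least $1-\beta$'' when $p$ and $p^*$ are deterministic is a point the paper leaves implicit.
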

All proofs can be found in the Appendix of the extended version of this article\footnote{\label{note:extended_paper}See \url{https://sia-lab-git.github.io/Verification_of_Neural_Reachable_Tubes.pdf}}.
Disregarding the confidence parameter $\beta$ for a moment, \theoremref{theorem:scenario-based_method} states that the fraction of $\safeset$ that is unsafe is bounded above by the violation parameter $\epsilon$, where $\epsilon$ is computed empirically using Equation \eqref{eq:theorem_parameters_relationship} based on the outlier rate $k$ encountered within $N$ samples.
$\epsilon$ is thus a reflection of the safety quality of $\safeset$, which degrades with the increase in the number of outliers $k$, as expected. This can also be seen for the running example in \figureref{fig:fixed_N} (the red curve).
Overall, \theoremref{theorem:scenario-based_method} allows us to compute probabilistic safety guarantees for any neural set $\safeset$ based on a finite number of samples.
Subsequently, this result can be used to find some $\safeset$ for which $\epsilon$ is smaller than a desired threshold, as we discuss later in this section.

To interpret $\beta$, note that $k$ is a random variable that depends on the randomly sampled $x_{1:N}$. It may be the case that we just happen to draw an unrepresentative sample, in which case the $\epsilon$ bound does not hold. $\beta$ controls the probability of this adverse event happening, which regards the correctness of the probabilistic safety guarantee in Equation \eqref{eq:robust_guarantee}. Fortunately, $\beta$ goes to $0$ exponentially with $N$, so $\beta$ can be chosen to be an extremely small value, such as $10^{-16}$, when we sample large $N$. $1-\beta$ will then be so close to $1$ that it does not have any practical importance.

We have just explained the robust scenario-based probabilistic safety verification method in the case where $\targetset$ represents undesirable states. When the system instead wants to reach $\targetset$, $\safeset$ will be a sublevel set instead of a superlevel set of the learned value function.
The cost inequality should be flipped when computing $k$, and the value inequality should be flipped in Equation \eqref{eq:robust_guarantee}. 

\subsection{Comparison of Robust and Iterative Scenario-Based Probabilistic Safety Verification}\label{sec:comparison}
\vspace{-0.5em}
The key difference between the proposed robust scenario-based method and the iterative scenario-based method discussed in \sectionref{sec:deepreach_iterative_verification} is that the former can handle nonzero empirical safety violations $k$.
This enables several crucial advantages that we demonstrate in \figureref{fig:fixed_N,fig:diff_N} for a solution learned by DeepReach on the multi-vehicle collision avoidance running example in \sectionref{sec:problem_setup}. We have fixed the confidence parameter $\beta = 10^{-16}$ to be so close to $0$ that it has no practical significance ($\beta$ plays the same role in both methods).
\vspace{-0.5em}
\begin{figure}[h!]
    \begin{minipage}[c]{0.32\textwidth}
       \includegraphics[trim={0 0 35cm 0},clip, width=\textwidth]{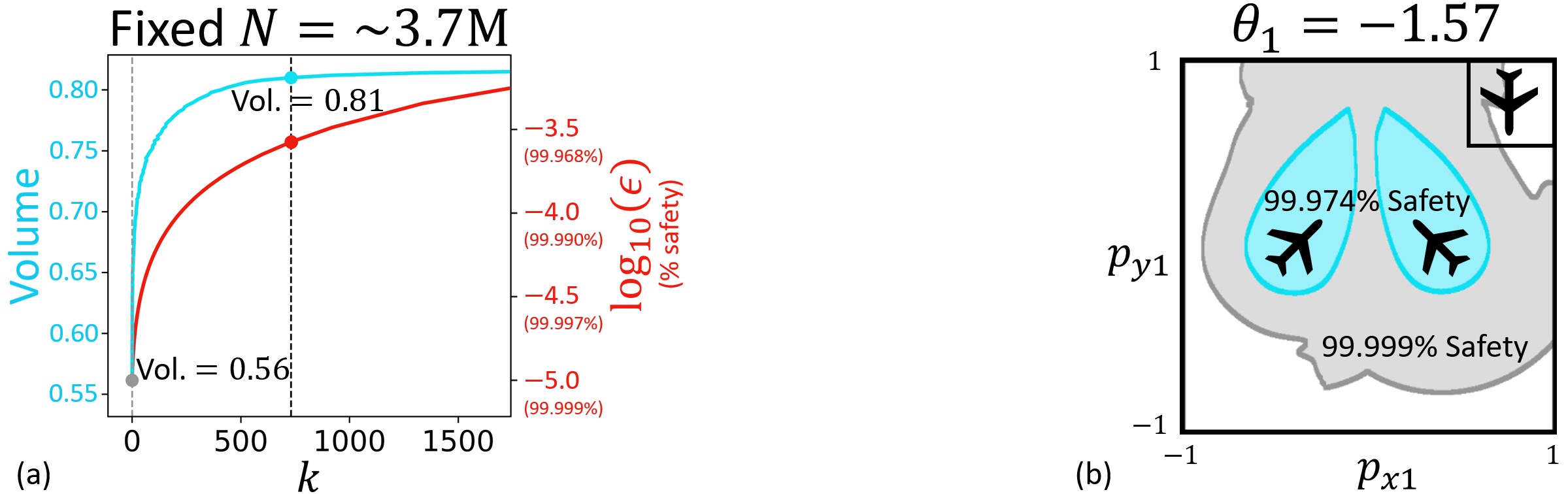}

        \vspace{0.5em}
       
        \includegraphics[trim={43cm 0 0 0},clip, width=0.75\textwidth]{figures/Figure_1}
    \end{minipage}\hfill
    \begin{minipage}[c]{0.67\textwidth}
        \caption{(Top) For a fixed simulation budget $N$, the cyan curve shows the number of empirical safety violations $k$ for different learned volumes (different super-levels of $\Tilde{\vfunc}(\state,0)$). The red curve shows the trade-off in safety strength $\epsilon$ (in log scale) for each $k$ using the robust method. The grey point indicates the iterative method baseline. The robust method is able to provide safety assurances even for the volumes that have non-zero outliers. (Dashed black line) By a small decrease in safety level (from $99.999\%$ to $99.974\%$) caused by outliers, we are able to significantly increase the assured safe volume from $0.56$ to $0.81$. (Bottom) Correspondingly, the safe set $\safeset$ increases greatly from the complement of the grey region to the complement of the blue region.} \label{fig:fixed_N}
    \end{minipage}
    \vspace{-1.0em}
\end{figure}

Firstly, for a fixed simulation budget $N$, the robust method allows one to trade off the probabilistic strength of safety (increasing $\epsilon$) for resilience (increasing $k$).
In other words, the method can verify any given neural safe set $\safeset$ in an outlier-robust fashion by automatically attenuating the level of safety assurance based on the number of empirical outliers (i.e., safety violations).
The iterative method, in contrast, can only verify a region that is outlier-free.
Consequently, the robust method enables one to engage in a trade-off if a large increase in safe set volume can be attained by a tolerable decrease in safety, as illustrated in \figureref{fig:fixed_N}.
\vspace{-0.5em}
\begin{figure}[h!]
    \centering
    \includegraphics[width=0.95\columnwidth]{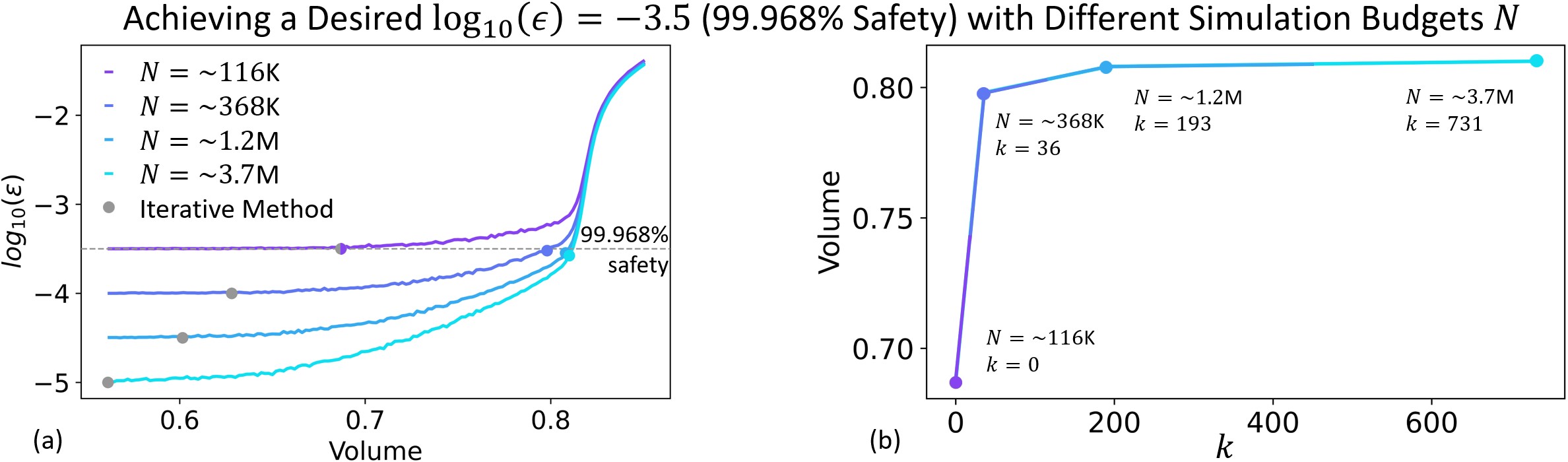}
    \vspace{-0.5em}
    \caption{(Left) Computing the safety strength $\epsilon$ across different volumes (different super-levels of $\Tilde{\vfunc}(\state,0)$) for different simulation budgets $N$ using the robust method. The grey points indicate the iterative method baselines. (Right) As we increase $N$, the largest volume achieving the desired $99.968\%$ safety using the robust method increases up to a limit.}
    \vspace{-1em}
    \label{fig:diff_N}
\end{figure}

Secondly, by allowing nonzero safety violations $k$, the robust method provides stronger safety assurances for a \textit{fixed} volume with increment in the simulation budget $N$, as long as the outlier rate does not grow substantially with $N$.
Thus, with more simulation effort, significantly larger volumes can be attained \textit{for a desired safety strength $\epsilon$} as shown in \figureref{fig:diff_N}.
Incrementing $N$ in the iterative method, on the other hand, will only correspond to verifying \textit{smaller} volumes at a \textit{stronger} $\epsilon$.
It cannot verify larger volumes for a fixed $\epsilon$, because empirical safety violations will be introduced.
\figureref{fig:diff_N} shows how the robust method (curves) adds a new degree of freedom for computing safety assurances compared to the iterative method (grey points).

\section{Conformal Probabilistic Safety Verification Method}\label{sec:conformal_method}
\vspace{-0.5em}
We now propose a \textit{conformal} probabilistic safety verification method for neural reachable tubes which is intended to be the direct analogue of the \textit{robust scenario-based} method in \sectionref{sec:scenario-based_method}.
The method is a straightforward application of split conformal prediction, a widely used method in the machine learning community for uncertainty quantification \citep{MAL-101}.

Using the same procedures as described in \sectionref{sec:scenario-based_method}, split conformal prediction can be used instead of robust scenario optimization to provide a probabilistic guarantee on the safety of the neural reachable tube and its complement, the neural safe set $\safeset$:
%
\begin{theorem}[Conformal Probabilistic Safety Verification]
    \label{theorem:conformal_method}
    Let the number of outliers $k$ and the number of samples $N$ be as defined in the procedures in \sectionref{sec:scenario-based_method}, then:
\begin{equation}\label{eq:conformal_method_theorem}
        \underset{\state \in \safeset}{\mathbb{P}} \left( \costFunction(\state,0) > 0 \right) \sim \mathrm{Beta}(N-k, k+1)
    \end{equation}
\end{theorem}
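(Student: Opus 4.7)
The plan is to invoke the standard split conformal prediction coverage result, whose setup maps directly onto the procedure in Section~\ref{sec:scenario-based_method}: $N$ i.i.d.\ calibration points from $\mathbb{P}$ on $\mathcal{S}$, with a fixed threshold on a scalar conformity score (here, the sign of $\costFunction(x,0)$) partitioning them into $k$ violations and $N-k$ non-violations. The claim is that, viewed as a confidence random variable induced by this calibration sampling, the true non-violation probability $p := \mathbb{P}_{x \in \mathcal{S}}(\costFunction(x,0) > 0)$ has the stated Beta distribution.

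First I would reduce to a Bernoulli structure. Since $x_1, \ldots, x_N$ are i.i.d., the indicators $W_i := \mathbb{1}[\costFunction(x_i,0) > 0]$ are i.i.d.\ Bernoulli$(p)$, so $N-k = \sum_i W_i \sim \text{Binomial}(N, p)$. The crux of the proof is then the standard identity between the Binomial tail and the regularized incomplete Beta function,
\[\sum_{i=0}^{k} \binom{N}{i} \epsilon^i (1-\epsilon)^{N-i} = I_{1-\epsilon}(N-k, k+1),\]
where $I_x(a,b)$ is the CDF of Beta$(a,b)$ at $x$. This identity converts Binomial-tail bounds on $k$ (for each candidate value of $p$) into Beta-quantile bounds on $p$: for any $\alpha \in (0,1)$, the $\alpha$-quantile of Beta$(N-k, k+1)$ is exactly the one-sided Clopper--Pearson lower confidence bound on $p$ at confidence $1-\alpha$. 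Collecting these quantile bounds across all $\alpha$ yields the distributional statement $p \sim \text{Beta}(N-k, k+1)$.

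The main obstacle is making the ``$\sim$'' notation precise, since $p$ is a deterministic population quantity rather than a genuine random variable. The proof must commit to one of several equivalent interpretations --- the conformal-prediction coverage distribution, the Clopper--Pearson confidence distribution, or the Bayesian posterior under the improper Beta$(0,1)$ prior --- all of which yield the same Beta$(N-k, k+1)$ law. The conformal interpretation is the natural choice here, as it also exposes the equivalence with the scenario-based \theoremref{theorem:scenario-based_method} via the identity above, which is advertised as a main contribution of the paper. A minor secondary concern to handle carefully is the degenerate edge case $k=N$, where the first Beta parameter vanishes and the density becomes improper.
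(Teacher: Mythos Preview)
Your proposal is correct but takes a different route than the paper. The paper's proof works entirely within the split conformal prediction framework of \citet{MAL-101}: it instantiates the score as $s(x) = -\costFunction(x,0)$, makes the data-dependent choice $\alpha = (k+1)/(N+1)$ so that the conformal threshold $\hat{q}$ coincides with the largest negative calibration score, and then cites the Beta-coverage result from Section~3.2 of that reference directly to read off $\mathrm{Beta}(N-k,k+1)$. You instead strip away the conformal scaffolding, reduce immediately to the Bernoulli/Binomial structure on the indicators $\mathbb{1}[\costFunction(x_i,0)>0]$, and recover the Beta law via the Binomial-tail/incomplete-Beta identity as a Clopper--Pearson confidence distribution. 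Your route is more elementary and self-contained, and your explicit discussion of what ``$\sim$'' means when $p$ is a deterministic population quantity is a point the paper's proof leaves implicit. The paper's route, by contrast, keeps the conformal machinery in the foreground, which directly serves its narrative goal of exhibiting the CP--scenario equivalence; your Bernoulli shortcut reaches the Beta result faster but makes that connection less visible, though you correctly observe that the same tail identity (your displayed equation, the paper's Equation~\eqref{eq:parameters_relationship}) is what underlies both \theoremref{theorem:scenario-based_method} and \theoremref{theorem:conformal_method}.
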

\theoremref{theorem:conformal_method} can be established via a straightforward application of conformal prediction with $-\costFunction(\state,0)$ as the scoring function. 
The proof is in the Appendix of the extended version of this article\textsuperscript{\ref{note:extended_paper}}. 
The above theorem states that the fraction of $\safeset$ that is safe is distributed according to the Beta distribution with shape parameters $N-k$ and $k+1$. 
Intuitively, the mass in the distribution shifts towards $0$ as $k$ increases for a fixed $N$, implying that it is more likely that a smaller fraction of $\safeset$ is safe, as expected.
For a fixed ratio $N:k$, $N$ controls how concentrated the mass is around the mean; i.e., for larger sample sizes $N$, we can more confidently determine the fraction of $\safeset$ that is safe.

%
To better understand \theoremref{theorem:conformal_method}, we show in \figureref{fig:beta} the Beta distribution of $\underset{\state \in \safeset}{\mathbb{P}} \left( \costFunction(\state,0) > 0 \right)$ for a solution learned by DeepReach on the multi-vehicle collision avoidance running example in \sectionref{sec:problem_setup}, for which $k=731$ outliers are found from $N=3684118$ samples.
%
%
\begin{figure}[h!]
    \begin{minipage}[c]{0.23\textwidth}
       \includegraphics[width=\textwidth]{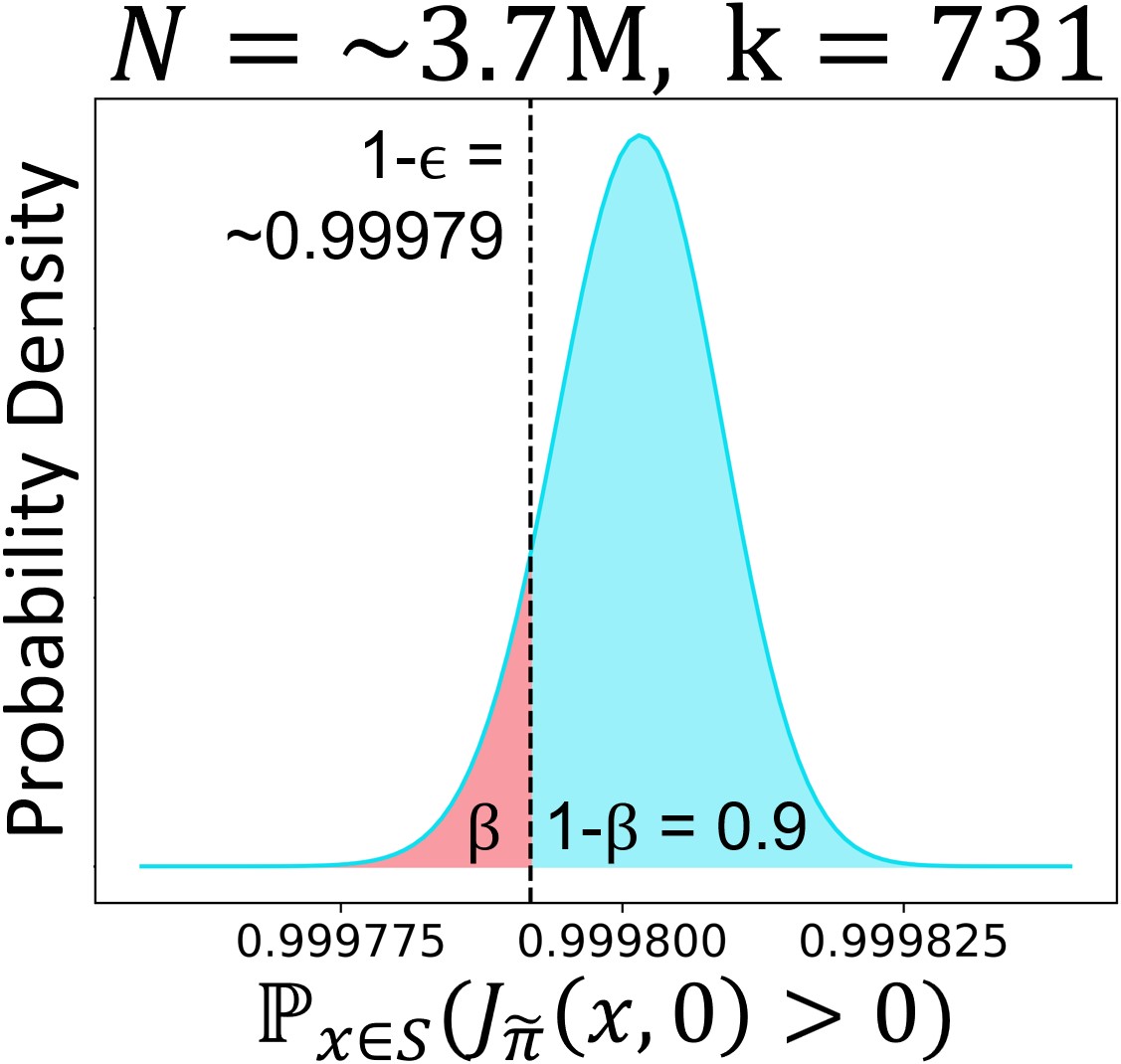}
    \end{minipage}\hfill
    \begin{minipage}[c]{0.73\textwidth}
        \caption{The Beta distribution of $\underset{\state \in \safeset}{\mathbb{P}} \left( \costFunction(\state,0) > 0 \right)$ when $k=731$ outliers are found from $N=3684118$ samples. (Dashed black line) For an example choice of confidence $1-\beta=0.9$ (shaded blue), we can lower-bound the fraction of $\safeset$ which is safe with at least $1-\epsilon=0.99979$ ($99.979\%$) confidence.} \label{fig:beta}
    \end{minipage}
    \vspace{-1.0em}
\end{figure}
\begin{remark} \label{remark:CP_coverage}
    The mean of the Beta distribution in Equation \eqref{eq:conformal_method_theorem} is given as $\frac{N-k}{N+1}$, which is roughly the fraction of the empirically safe samples. 
    One can immediately derive that the safety probability of $\safeset$, marginalized over the sampled ``calibration'' states, is given as: $\underset{ \left( \state_{1:N},\state \right) \in \safeset}{\mathbb{P}} \left( \costFunction(\state,0) > 0 \right) \ge \frac{N-k}{N+1}$, which precisely resembles the most commonly used \textbf{coverage property} of split conformal prediction.
\end{remark}

Even though \theoremref{theorem:conformal_method} provides the distribution of the safety level, when we compute safety assurances in practice, it is often desirable to know a lower-bound on the safety level with at least some desired confidence.
This corresponds to choosing a lower-bound whose accumulated probability mass is smaller than some confidence parameter $\beta$ (shaded red in \figureref{fig:beta}).
The following lemma formalizes this by using the CDF of the Beta distribution in \theoremref{theorem:conformal_method}. 
\begin{lemma}[Conformal Probabilistic Safety Verification]
    \label{lemma:conformal_method}
    Select a safety violation parameter $\epsilon \in (0, 1)$ and a confidence parameter $\beta \in (0, 1)$ such that
    \begin{equation} \label{eq:conformal_method_lemma}
        \begin{aligned}
            \sum^k_{i=0} \binom{N}{i} \epsilon^i (1-\epsilon)^{N-i} \le \beta
        \end{aligned}
    \end{equation}
    where $k$ and $N$ are as defined above. 
    Then, with probability at least $1-\beta$, the following holds:
    \begin{equation} \label{eq:conformal_method_lemma_guarantee}
        \begin{aligned}
            \underset{\state \in \safeset}{\mathbb{P}} \left( \vfunc(\state,0) \le 0 \right) \le \epsilon
        \end{aligned}
    \end{equation}
\end{lemma}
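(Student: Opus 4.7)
The plan is to derive the lemma as a quantile-level consequence of Theorem 4, using the classical identity between the Beta CDF and the binomial tail.

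First I would translate Theorem 4 into a statement about the \emph{unsafety} probability. Let $p_{\text{unsafe}} := \underset{\state \in \safeset}{\mathbb{P}}(\costFunction(\state,0) \le 0)$. Since the safety probability $1 - p_{\text{unsafe}}$ is distributed as $\mathrm{Beta}(N-k,\, k+1)$ by Theorem 4, and the Beta family is closed under the map $q \mapsto 1-q$ with a swap of shape parameters, we immediately have $p_{\text{unsafe}} \sim \mathrm{Beta}(k+1,\, N-k)$. The desired high-probability bound $\underset{\state \in \safeset}{\mathbb{P}}(\costFunction(\state,0) \le 0) \le \epsilon$ holding with probability at least $1-\beta$ then becomes a CDF condition: $F_{\mathrm{Beta}(k+1,\,N-k)}(\epsilon) \ge 1-\beta$.

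Next I would invoke the well-known identity between the regularized incomplete Beta function and the binomial tail:
\begin{equation*}
I_{\epsilon}(k+1,\, N-k) \;=\; \sum_{i=k+1}^{N} \binom{N}{i}\epsilon^{i}(1-\epsilon)^{N-i} \;=\; 1 - \sum_{i=0}^{k} \binom{N}{i}\epsilon^{i}(1-\epsilon)^{N-i}.
\end{equation*}
Substituting this into the CDF condition shows that $F_{\mathrm{Beta}(k+1,\,N-k)}(\epsilon) \ge 1-\beta$ is equivalent to $\sum_{i=0}^{k}\binom{N}{i}\epsilon^{i}(1-\epsilon)^{N-i} \le \beta$, which is exactly the hypothesis \eqref{eq:conformal_method_lemma} of the lemma. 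Thus, under this hypothesis, with probability at least $1-\beta$ over the draw of the $N$ calibration samples, the unsafety probability of $\safeset$ under the learned policy $\Tilde{\policy}$ is at most $\epsilon$.

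Finally, to close the gap between $\costFunction$ and the true value function $\vfunc$, I would use the fact that $\vfunc(\state,0) = \sup_{\cfunc}\costfunctional_{\cfunc}(\state,0) \ge \costFunction(\state,0)$ in the avoid case (Section 3.1). Consequently $\{\state : \vfunc(\state,0) \le 0\} \subseteq \{\state : \costFunction(\state,0) \le 0\}$, so $\underset{\state \in \safeset}{\mathbb{P}}(\vfunc(\state,0) \le 0) \le \underset{\state \in \safeset}{\mathbb{P}}(\costFunction(\state,0) \le 0) \le \epsilon$, yielding \eqref{eq:conformal_method_lemma_guarantee}. The reach case follows analogously by reversing the relevant inequalities. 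The main obstacle is really just citing and applying the Beta–binomial identity cleanly; once that is in hand the lemma follows mechanically from Theorem 4, which is precisely why this lemma has the same form as Theorem 3 and foreshadows the equivalence between the two verification methods argued later in the paper.
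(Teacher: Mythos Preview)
Your proof is correct and uses the same mathematical content as the paper's: the Beta--binomial CDF identity together with the inequality $\costFunction(\state,0)\le \vfunc(\state,0)$ from Equation~\eqref{eq:valuefunc}. The only cosmetic difference is packaging---the paper first proves the general reduction of split conformal prediction to the scenario result (Lemma~\ref{lemma:CCP}) and then specializes, whereas you derive the bound directly from Theorem~\ref{theorem:conformal_method}; both routes compute the same Beta CDF and arrive at the same inequality.
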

\lemmaref{lemma:conformal_method} is, in fact, precisely the same result as obtained by \theoremref{theorem:scenario-based_method} using robust scenario optimization.
This is no coincidence, as one can show that split conform prediction more generally reduces to a robust scenario-optimization problem.
\begin{remark}
    In general, a split conformal prediction problem can be reduced to a robust scenario-optimization problem. This is proven in the Appendix of the extended version of this article.\textsuperscript{\ref{note:extended_paper}}
\end{remark}
Due to the equivalence between conformal method and robust scenario-based methods, the analysis in \sectionref{sec:scenario-based_method} holds here as well.
More generally, we hope that this insight will lead to future research into further investigating the close relationship between the two methods.
\section{Outlier-Adjusted Probabilistic Safety Verification Approach}
\vspace{-0.5em}
The verification methods in \sectionref{sec:scenario-based_method,sec:conformal_method} are limited by the quality of the neural reachable tube.
Although they can account for outliers, the computed safety level can be low if the outlier rate is high.
This can lead to significant losses in the safe volume, as demonstrated in \sectionref{sec:rocketlanding,sec:reachavoidrocketlanding}.

To address this issue, we propose an outlier-adjusted approach that can recover a larger safe volume for any desired $\epsilon$.
Note that in the verification methods, the key quantity which determines $\epsilon$ is the number of safety violations $k$.
This corresponds to the number of samples $\state_i$ which are marked safe by membership in $\safeset$, i.e., $\Tilde{\vfunc}(\state_i,0) \ge \delta$, but are not guaranteed to be safe, i.e., $\costFunction\left(\state_i,0\right) \le 0$.
It is easy to see that the best we can do to simultaneously minimize $k$ and maximize volume is to compute $\safeset$ as the super-$\delta$ level set of the induced \textit{cost} function $\costFunction(\state,0)$.
For example, the largest possible $\safeset$ that is guaranteed to be violation-free is precisely the super-zero level set of $\costFunction(\state,0)$.
Thus, our overall approach will be to refine $\Tilde{\vfunc}(\state,0)$ so that it more accurately reflects $\costFunction(\state,0)$.

Modeling $\costFunction(\state,0)$ can be formulated as a supervised learning problem, since we can sample a state $\state_i$ and compute its cost $\costFunction(\state_i,0)$ in simulation.
We learn an approximation $\learnedCostFunction(\state,0)$ by retraining $\Tilde{\vfunc}(\state,0)$ on a training dataset $\mathcal{T}$ of $n$ samples, $\mathcal{T} = (\state_1, \costFunction(\state_1,0)), ..., (\state_n, \costFunction(\state_n,0))$.
Specifically, we use the \textit{weighted} MSE loss $\frac{1}{n}\sum_{i=1}^{n}w_{i}(\Tilde{\vfunc}(\state_i,0) - \costFunction(\state_i,0))^2$, where $w_{i} = w$ if the error is conservative $\left(\Tilde{\vfunc}(\state_i,0) < \costFunction(\state_i,0)\right)$, otherwise $w_i = 1$.
We introduce $w$ as a hyperparameter to underweight conservative errors because in the end, we are concerned with recovering larger \textit{safe} volumes.
Thus, selecting a small $w$ allows us to focus on reducing \textit{optimistic} errors $\left(\Tilde{\vfunc}(\state_i,0) > \costFunction(\state_i,0)\right)$ which are more safety-critical and correspond to outlier safety violations.

To avoid overfitting, we select the training checkpoint that performs best on a validation dataset $\mathcal{V}$.
The validation metric we use is the maximum learned cost of an empirically unsafe state: $\max_{x\in \mathcal{V}}\{\learnedCostFunction(\state,0): \costFunction(\state,0) \le 0\}$, which one can think of as a proxy for the recoverable safe volume.
We demonstrate the efficacy of the proposed outlier-adjusted approach for the high-dimensional systems of multi-vehicle collision avoidance and rocket landing with no-go zones. For all case studies, we set $w=10^{-3}$ during retraining, fix the confidence parameter $\beta = 10^{-16}$ and find a safe volume that satisfies $\epsilon \le 10^{-4}$ ($99.990\%$ safety) using the robust method in \sectionref{sec:scenario-based_method}.

\subsection{Multi-Vehicle Collision Avoidance} \label{sec:multicollision}
\vspace{-0.5em}
In \figureref{fig:multivehicle}, we compare our outlier-adjusted approach (blue) to the baseline (grey) for a DeepReach solution trained on the multi-vehicle collision avoidance running example in \sectionref{sec:problem_setup}.
A $2.3\%$ increase in the safe volume is attained, shown by the tightened $\BRT$. Note that the largest visual difference in the $\BRT$ is where the third vehicle is between the two others; intuitively, the safety in this region is likely more difficult to model by the baseline approach.
\vspace{-0.5em}
\begin{SCfigure}[1.22][htbp]
    \caption{Multi-Vehicle Collision Avoidance: outlier-adjusted (blue) and baseline (grey) results. (Left) Slice of the neural BRTs achieving $\epsilon=10^{-4}$ ($99.990\%$ safety). (Right) The outlier-adjusted approach increases the safe volume from $0.782$ to $0.8$ ($2.3\%$ increase).}
    \includegraphics[width=0.6\columnwidth]{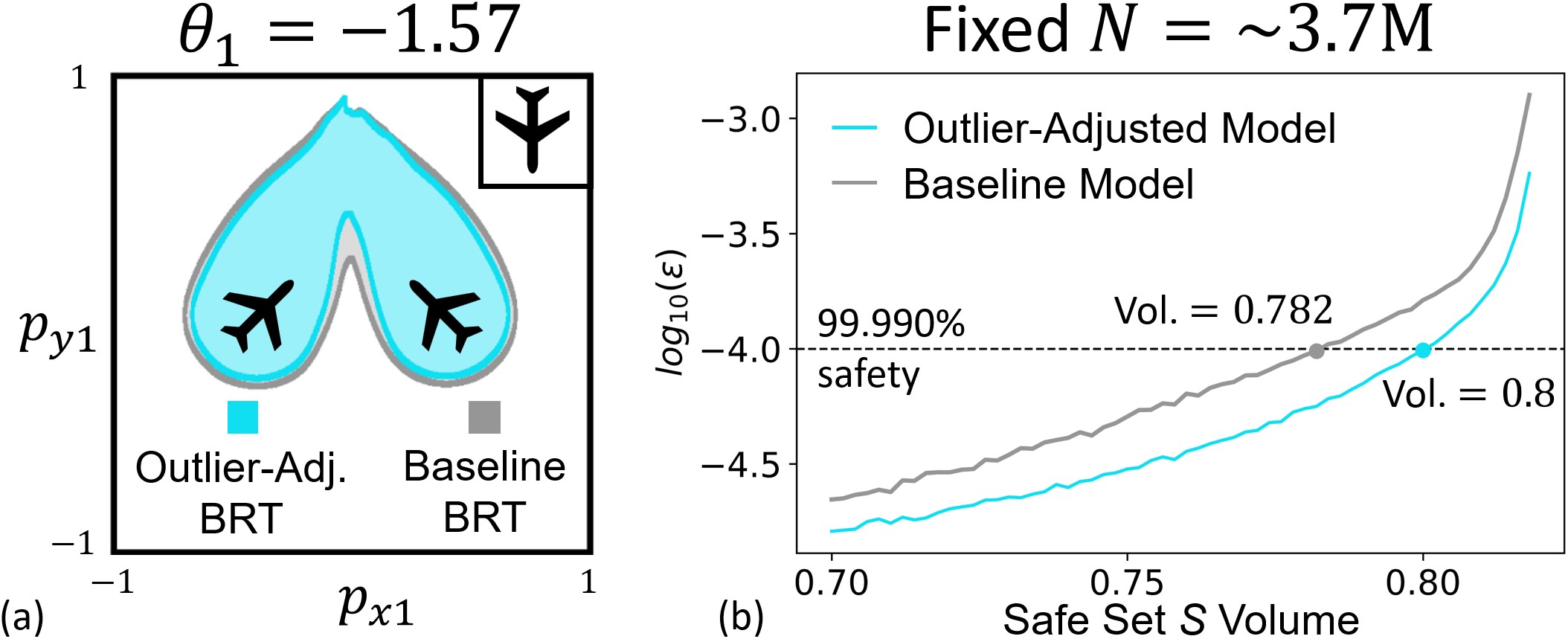}
    \label{fig:multivehicle}
\end{SCfigure}
\vspace{-1.0em}

\subsection{Rocket Landing} \label{sec:rocketlanding}
\vspace{-0.5em}
We now apply our approach to a 6D rocket landing system with position $(p_{x}, p_{y})$, heading $\theta$, velocity $(v_{x}, v_{y})$, angular velocity $\omega$, and torque controls $\tau_1, \tau_2\in [-250, 250]$. The dynamics are: $\dot{p_x} = v_x, ~\dot{p_y} = v_y, ~\dot{\theta} = \omega, ~\dot{\omega} = 0.3\tau_1, ~\dot{v_x} = \tau_1 \cos{\theta} - \tau_2 \sin{\theta}, ~\dot{v_y} = \tau_1 \sin{\theta} + \tau_2 \cos{\theta} - g$, where $g = 9.81$ is acceleration due to gravity.
The target set is the set of states where the rocket reaches a rectangular landing zone of side length 20m centered at the origin: $\targetset = \{\state: |p_x| < 20.0, p_y < 20.0\}$.
Note that we want to \textit{reach} $\targetset$, so the $\BRT$ now represents the safe set.
Results are shown in \figureref{fig:rocketlanding}.
Interestingly, a large $9.58\%$ increase in the volume of the safe set is recovered using the proposed approach, particularly near the lower-left part of the state space.
Further investigation reveals that the trajectories starting from these states exit the training regime south.
This highlights a general limitation of computing the value function over a constrained state space where information is propagated via dynamic programming, which affects both learning-based methods and traditional grid-based methods.
Nevertheless, in this case, the relative order of the value function levels is still preserved, leading to a high quality safe policy and recovery of a larger safe volume.

%
\begin{SCfigure}[1.22][htbp]
    \caption{Rocket Landing: outlier-adjusted (blue) and baseline (grey) results. (Left) Slice of the neural BRTs achieving $\epsilon=10^{-4}$ ($99.990\%$ safety). (Right) The outlier-adjusted approach increases the safe volume from $0.334$ to $0.366$ ($9.58\%$ increase).}
    \includegraphics[width=0.6\columnwidth]{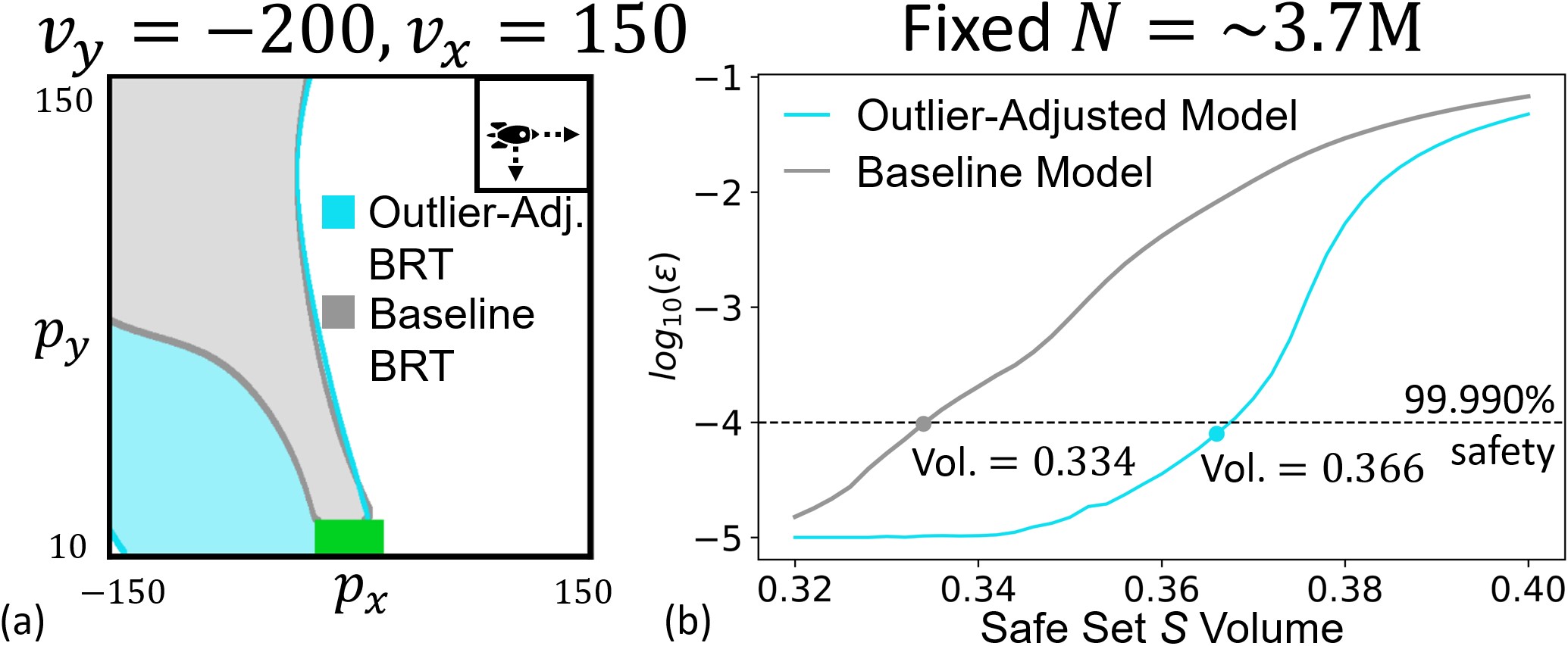}
    \label{fig:rocketlanding}
\end{SCfigure}
%

\subsection{Rocket Landing with No-Go Zones} \label{sec:reachavoidrocketlanding}
\vspace{-0.5em}
We now consider the rocket landing problem in a constrained airspace where we have no-go zones of height 100m and width 10m to the left of the landing zone and where altitude is below the landing zone.
Safety in this case takes the form of a reach-avoid set - the rocket needs to reach the landing zone while avoiding the no-go zones.
An analogous HJI-VI to the one in \sectionref{sec:background_reachability} can be derived for this case, whose solution can be computed using DeepReach.
However, since reach-avoid problems are more complex than just the reach or avoid problem, the DeepReach solution results in a poor safety volume.
In fact, \textit{no} safe volume can be recovered with the desired safety level of $\epsilon \le 10^{-4}$.
In contrast, we can recover a sizable safe volume using the outlier-adjusted approach, as shown in \figureref{fig:reachavoidrocketlanding}.
These examples highlight the utility of the proposed approach.

%
\begin{SCfigure}[1.22][htbp]
    \caption{Rocket Landing with No-Go Zones: outlier-adjusted (blue) and baseline (grey) results. (Left) Slice of the neural BRTs achieving $\epsilon=10^{-4}$ ($99.990\%$ safety). (Right) The outlier-adjusted approach increases the safe volume from $0$ to $0.19$.}
    \includegraphics[width=0.6\columnwidth]{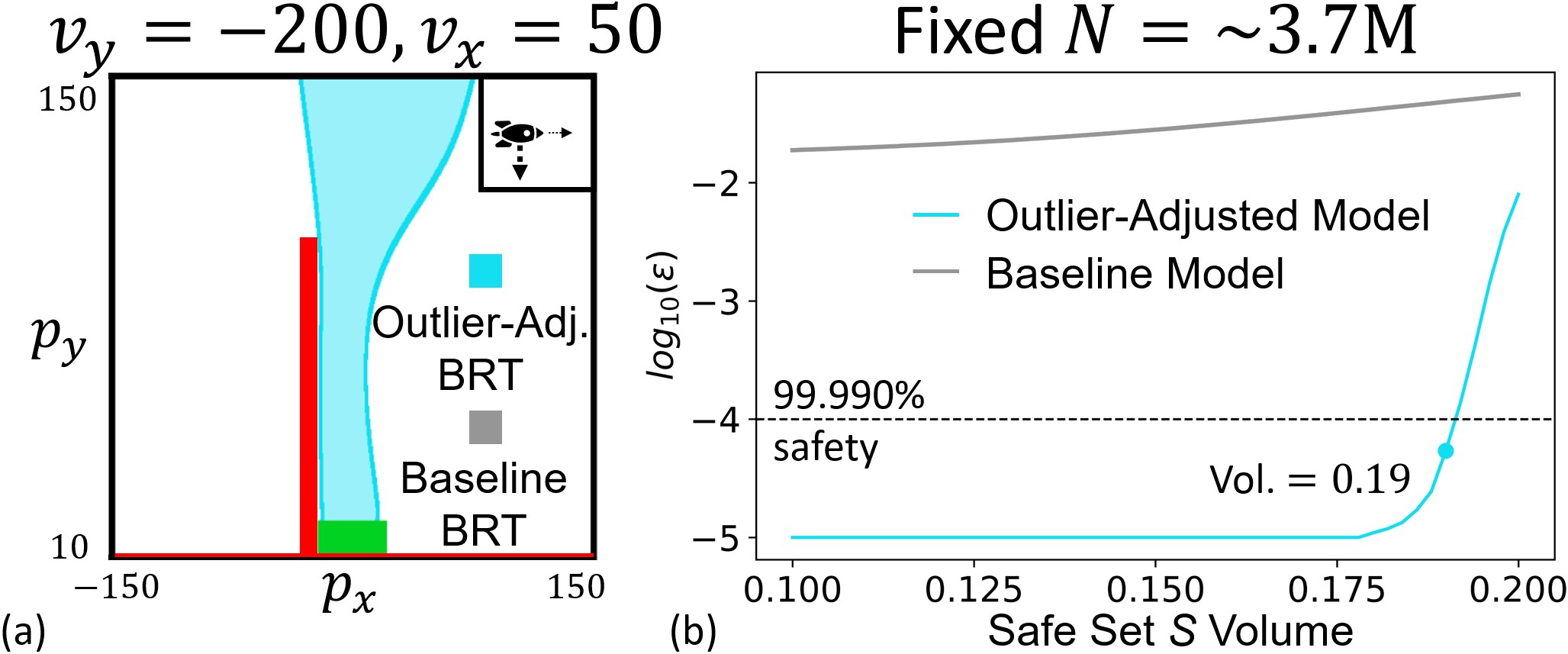}
    \label{fig:reachavoidrocketlanding}
\end{SCfigure}
%
\section{Discussion and Future Work}
\vspace{-0.5em}
In this work, we propose two different verification methods, based on robust scenario optimization and conformal prediction, to provide probabilistic safety guarantees for neural reachable tubes.
Our methods allow a direct trade-off between resilience to outlier errors in the neural tube, which are inevitable in a learning-based approach, and the strength of the probabilistic safety guarantee.
Furthermore, we show that split conformal prediction, a widely used method in the machine learning community for uncertainty quantification, reduces to a scenario-based approach, making the two methods equivalent not only for verification of neural reachable tubes but also more generally.
We hope that our proof will lead to future insights into the close relationship between the highly related but disparate fields of conformal prediction and scenario optimization.
Finally, we propose an outlier-adjusted verification approach that harnesses information about the error distribution in neural reachable tubes to recover greater safe volumes.
We demonstrate the efficacy of the proposed approaches for the high-dimensional problems of multi-vehicle collision avoidance and rocket landing with no-go zones.
Altogether, these are important steps toward using learning-based reachability methods to compute safety assurances for high-dimensional systems in the real world.

In the future, we will explore how the key idea of the outlier-adjusted verification approach, using cost labels as a supervised learning signal, can be used to enhance the accuracy of learning-based reachability methods like DeepReach.
Other directions include providing safety assurances in the presence of worst-case disturbances and in real-time for tubes that are generated online.

\acks{This work is supported in part by a NASA Space Technology Graduate Research Opportunity, the NVIDIA Academic Hardware Grant Program, the NSF CAREER Program under award 2240163, and the DARPA ANSR program.}

\bibliography{bibliography/bansal_papers, bibliography/formal_safety_references, bibliography/opt_ctrl_and_dp, bibliography/reachability, bibliography/references}

\begin{thebibliography}{31}
\providecommand{\natexlab}[1]{#1}
\providecommand{\url}[1]{\texttt{#1}}
\expandafter\ifx\csname urlstyle\endcsname\relax
  \providecommand{\doi}[1]{doi: #1}\else
  \providecommand{\doi}{doi: \begingroup \urlstyle{rm}\Url}\fi

\bibitem[Althoff et~al.(2010)Althoff, Stursberg, and Buss]{althoff2010computing}
Matthias Althoff, Olaf Stursberg, and Martin Buss.
\newblock Computing reachable sets of hybrid systems using a combination of zonotopes and polytopes.
\newblock \emph{Nonlinear analysis: hybrid systems}, 4\penalty0 (2):\penalty0 233--249, 2010.

\bibitem[Angelopoulos and Bates(2023)]{MAL-101}
Anastasios~N. Angelopoulos and Stephen Bates.
\newblock Conformal prediction: A gentle introduction.
\newblock \emph{Foundations and Trends® in Machine Learning}, 16\penalty0 (4):\penalty0 494--591, 2023.
\newblock ISSN 1935-8237.
\newblock \doi{10.1561/2200000101}.
\newblock URL \url{http://dx.doi.org/10.1561/2200000101}.

\bibitem[Bak et~al.(2019)Bak, Tran, and Johnson]{bak2019numerical}
Stanley Bak, Hoang-Dung Tran, and Taylor~T Johnson.
\newblock Numerical verification of affine systems with up to a billion dimensions.
\newblock In \emph{International Conference on Hybrid Systems: Computation and Control}, pages 23--32, 2019.

\bibitem[Bansal and Tomlin(2021)]{bansal2021deepreach}
Somil Bansal and Claire~J Tomlin.
\newblock {DeepReach}: A deep learning approach to high-dimensional reachability.
\newblock In \emph{IEEE International Conference on Robotics and Automation (ICRA)}, 2021.

\bibitem[Bansal et~al.(2017)Bansal, Chen, Herbert, and Tomlin]{bansal2017hamilton}
Somil Bansal, Mo~Chen, Sylvia Herbert, and Claire~J Tomlin.
\newblock {Hamilton-Jacobi Reachability}: A brief overview and recent advances.
\newblock In \emph{IEEE Conference on Decision and Control (CDC)}, 2017.

\bibitem[Campi and Garatti(2011)]{campi2011sampling}
M.~C. Campi and S.~Garatti.
\newblock A sampling-and-discarding approach to chance-constrained optimization: feasibility and optimality.
\newblock \emph{Journal of Optimization Theory and Applications}, 2011.

\bibitem[Campi et~al.(2009)Campi, Garatti, and Prandini]{campi2009scenario}
M.~C. Campi, S.~Garatti, and M.~Prandini.
\newblock The scenario approach for systems and control design.
\newblock \emph{Annual Reviews in Control}, 2009.

\bibitem[Chow et~al.(2017)Chow, Darbon, Osher, and Yin]{chow2017algorithm}
Yat~Tin Chow, J{\'e}r{\^o}me Darbon, Stanley Osher, and Wotao Yin.
\newblock Algorithm for overcoming the curse of dimensionality for time-dependent non-convex hamilton--jacobi equations arising from optimal control and differential games problems.
\newblock \emph{Journal of Scientific Computing}, 73\penalty0 (2-3):\penalty0 617--643, 2017.

\bibitem[Coogan and Arcak(2015)]{coogan2015efficient}
Samuel Coogan and Murat Arcak.
\newblock Efficient finite abstraction of mixed monotone systems.
\newblock In \emph{Proceedings of the 18th International Conference on Hybrid Systems: Computation and Control}, pages 58--67, 2015.

\bibitem[Darbon et~al.(2020)Darbon, Langlois, and Meng]{darbon2020overcoming}
Jerome Darbon, Gabriel~P Langlois, and Tingwei Meng.
\newblock Overcoming the curse of dimensionality for some hamilton--jacobi partial differential equations via neural network architectures.
\newblock \emph{Research in the Mathematical Sciences}, 7\penalty0 (3):\penalty0 1--50, 2020.

\bibitem[Djeridane and Lygeros(2006)]{djeridane2006neural}
Badis Djeridane and John Lygeros.
\newblock Neural approximation of pde solutions: An application to reachability computations.
\newblock In \emph{Conference on Decision and Control}, pages 3034--3039, 2006.

\bibitem[{\relax DLMF}()]{NIST:DLMF}
{\relax DLMF}.
\newblock {\it NIST Digital Library of Mathematical Functions}.
\newblock \url{https://dlmf.nist.gov/}, Release 1.1.11 of 2023-09-15, 2023.
\newblock URL \url{https://dlmf.nist.gov/}.
\newblock F.~W.~J. Olver, A.~B. {Olde Daalhuis}, D.~W. Lozier, B.~I. Schneider, R.~F. Boisvert, C.~W. Clark, B.~R. Miller, B.~V. Saunders, H.~S. Cohl, and M.~A. McClain, eds.

\bibitem[Dreossi et~al.(2016)Dreossi, Dang, and Piazza]{Dreossi16}
Tommaso Dreossi, Thao Dang, and Carla Piazza.
\newblock Parallelotope bundles for polynomial reachability.
\newblock In \emph{International Conference on Hybrid Systems: Computation and Control}, 2016.

\bibitem[Fisac et~al.(2019)Fisac, Lugovoy, Rubies-Royo, Ghosh, and Tomlin]{fisac2019bridging}
Jaime~F. Fisac, Neil~F. Lugovoy, Vicen{\c{c}} Rubies-Royo, Shromona Ghosh, and Claire~J. Tomlin.
\newblock {Bridging Hamilton-{J}acobi Safety Analysis and Reinforcement Learning}.
\newblock \emph{International Conference on Robotics and Automation}, 2019.

\bibitem[Frehse et~al.(2011)Frehse, {Le Guernic}, Donz{\'{e}}, Cotton, Ray, Lebeltel, Ripado, Girard, Dang, and Maler]{Frehse2011}
G.~Frehse, C.~{Le Guernic}, A.~Donz{\'{e}}, S.~Cotton, R.~Ray, O.~Lebeltel, R.~Ripado, A.~Girard, T.~Dang, and O.~Maler.
\newblock {SpaceEx}: Scalable verification of hybrid systems.
\newblock In \emph{International Conference Computer Aided Verification}, 2011.

\bibitem[Girard(2005)]{girard2005reachability}
Antoine Girard.
\newblock Reachability of uncertain linear systems using zonotopes.
\newblock In \emph{International Workshop on Hybrid Systems: Computation and Control}, pages 291--305, 2005.

\bibitem[Greenstreet and Mitchell(1998)]{10.1007/3-540-64358-3_38}
Mark~R. Greenstreet and Ian Mitchell.
\newblock Integrating projections.
\newblock In Thomas~A. Henzinger and Shankar Sastry, editors, \emph{Hybrid Systems: Computation and Control}, pages 159--174, Berlin, Heidelberg, 1998. Springer Berlin Heidelberg.
\newblock ISBN 978-3-540-69754-1.

\bibitem[Henrion and Korda(2014)]{henrion2014convex}
D.~Henrion and M.~Korda.
\newblock Convex computation of the region of attraction of polynomial control systems.
\newblock \emph{IEEE Transactions on Automatic Control}, 59\penalty0 (2):\penalty0 297--312, 2014.

\bibitem[Kurzhanski and Varaiya(2002)]{Kurzhanski02}
Alexander Kurzhanski and Pravin Varaiya.
\newblock On ellipsoidal techniques for reachability analysis. part ii: Internal approximations box-valued constraints.
\newblock \emph{Optimization Methods and Software}, 17:\penalty0 207--237, 01 2002.
\newblock \doi{10.1080/1055678021000012435}.

\bibitem[Kurzhanski and Varaiya(2000)]{Kurzhanski00}
Alexander~B Kurzhanski and Pravin Varaiya.
\newblock Ellipsoidal techniques for reachability analysis: internal approximation.
\newblock \emph{Systems \& Control Letters}, 2000.

\bibitem[Lin and Bansal(2023)]{lin2023generating}
Albert Lin and Somil Bansal.
\newblock Generating formal safety assurances for high-dimensional reachability.
\newblock In \emph{2023 IEEE International Conference on Robotics and Automation (ICRA)}, pages 10525--10531. IEEE, 2023.

\bibitem[Lygeros(2004)]{lygeros2004reachability}
John Lygeros.
\newblock On reachability and minimum cost optimal control.
\newblock \emph{Automatica}, 40\penalty0 (6):\penalty0 917--927, 2004.

\bibitem[Maidens et~al.(2013)Maidens, Kaynama, Mitchell, Oishi, and Dumont]{Maidens13}
John~N Maidens, Shahab Kaynama, Ian~M Mitchell, Meeko~MK Oishi, and Guy~A Dumont.
\newblock Lagrangian methods for approximating the viability kernel in high-dimensional systems.
\newblock \emph{Automatica}, 2013.

\bibitem[Majumdar and Tedrake(2017)]{majumdar2017funnel}
A.~Majumdar and R.~Tedrake.
\newblock Funnel libraries for real-time robust feedback motion planning.
\newblock \emph{The International Journal of Robotics Research}, 36\penalty0 (8):\penalty0 947--982, 2017.

\bibitem[Majumdar et~al.(2014)Majumdar, Vasudevan, Tobenkin, and Tedrake]{doi:10.1177/0278364914528059}
Anirudha Majumdar, Ram Vasudevan, Mark~M. Tobenkin, and Russ Tedrake.
\newblock Convex optimization of nonlinear feedback controllers via occupation measures.
\newblock \emph{The International Journal of Robotics Research}, 33\penalty0 (9):\penalty0 1209--1230, 2014.
\newblock \doi{10.1177/0278364914528059}.
\newblock URL \url{https://doi.org/10.1177/0278364914528059}.

\bibitem[Mitchell et~al.(2005)Mitchell, Bayen, and Tomlin]{mitchell2005time}
Ian Mitchell, Alex Bayen, and Claire~J. Tomlin.
\newblock A time-dependent {Hamilton-Jacobi} formulation of reachable sets for continuous dynamic games.
\newblock \emph{IEEE Transactions on Automatic Control (TAC)}, 50\penalty0 (7):\penalty0 947--957, 2005.

\bibitem[Niarchos and Lygeros(2006)]{niarchos2006neural}
KN~Niarchos and John Lygeros.
\newblock A neural approximation to continuous time reachability computations.
\newblock In \emph{Conference on Decision and Control}, pages 6313--6318, 2006.

\bibitem[Nilsson and Ozay(2016)]{Nilsson2016}
Petter Nilsson and Necmiye Ozay.
\newblock Synthesis of separable controlled invariant sets for modular local control design.
\newblock In \emph{American Control Conference}, pages 5656--5663, 2016.

\bibitem[Onken et~al.(2022)Onken, Nurbekyan, Li, Fung, Osher, and Ruthotto]{onken2022neural}
Derek Onken, Levon Nurbekyan, Xingjian Li, Samy~Wu Fung, Stanley Osher, and Lars Ruthotto.
\newblock A neural network approach for high-dimensional optimal control applied to multiagent path finding.
\newblock \emph{IEEE Transactions on Control Systems Technology}, 2022.

\bibitem[Rubies-Royo et~al.(2019)Rubies-Royo, Fridovich-Keil, Herbert, and Tomlin]{rubies2019classification}
Vicen{\c{c}} Rubies-Royo, David Fridovich-Keil, Sylvia Herbert, and Claire~J Tomlin.
\newblock A classification-based approach for approximate reachability.
\newblock In \emph{International Conference on Robotics and Automation}, pages 7697--7704. IEEE, 2019.

\bibitem[Vovk(2012)]{pmlr-v25-vovk12}
Vladimir Vovk.
\newblock Conditional validity of inductive conformal predictors.
\newblock In Steven C.~H. Hoi and Wray Buntine, editors, \emph{Proceedings of the Asian Conference on Machine Learning}, volume~25 of \emph{Proceedings of Machine Learning Research}, pages 475--490, Singapore Management University, Singapore, 04--06 Nov 2012. PMLR.
\newblock URL \url{https://proceedings.mlr.press/v25/vovk12.html}.

\end{thebibliography}

\newpage
\appendix
\section{Robust Scenario-Based Proofs}\label{apd:SO}

First, we introduce and prove the following lemma regarding a generic 1-dimensional chance-constrained optimization problem (CCP), which will be useful for subsequent proofs.
\begin{lemma}[Solution Feasibility for a 1-D CCP]\label{lemma:CCP}
    Consider the following 1-dimensional CCP:
    \begin{align}\label{eq:CCP}
    \begin{split}
        \text{CCP}_\epsilon: &\min_{g\in \mathbb{R}}{g} \\
        &\text{s.t. } \underset{h \in H}{\mathbb{P}} \left( f(h) \le g \right) \ge 1-\epsilon
    \end{split}
    \end{align}
    where $g$ is the 1-dimensional optimization variable, $h$ is the uncertain parameter that describes different instances of an uncertain optimization scenario, and $f$ is some function of $h$.
    
    The corresponding sample counterpart (SP) of this CCP is:
    \begin{align}\label{eq:SP}
    \begin{split}
        \text{SP}^A_{N,k}: &\min_{g \in \mathbb{R}}{g} \\
        &\text{s.t. } f(h_i) \le g, \quad i \in \{1, ..., N\}-A\{h_1, ..., h_N\}
    \end{split}
    \end{align}
    where $N$ constraints are sampled but $k$ constraints are discarded according to some constraint elimination algorithm $A$.
    Let $g^*_{N,k}$ denote the solution to the above SP. Select a violation parameter $\epsilon \in (0, 1)$ and a confidence parameter $\beta \in (0, 1)$ such that
    \begin{equation}\label{eq:parameters_relationship}
        \begin{aligned}
            \sum^k_{i=0} \binom{N}{i} \epsilon^i (1-\epsilon)^{N-i} \le \beta
        \end{aligned}
    \end{equation}
    Then, with probability at least $1-\beta$, the following holds:
    \begin{equation}\label{eq:CCP_guarantee}
        \begin{aligned}
            \underset{h \in H}{\mathbb{P}} \left( f(h) > g^*_{N,k} \right) \le \epsilon
        \end{aligned}
    \end{equation}
\end{lemma}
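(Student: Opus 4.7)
The plan is to give a direct proof via order statistics, exploiting the one-dimensional structure of the decision variable $g$. First, since each constraint $f(h_i) \le g$ defines the half-line $[f(h_i), \infty) \subset \mathbb{R}$, the feasible set of $\text{SP}^A_{N,k}$ is $[\max_{i \notin A} f(h_i), \infty)$, giving the closed-form optimum $g^*_{N,k} = \max_{i \notin A} f(h_i)$, independent of the particular discarding rule implemented by $A$.

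Next, I would reformulate the violation event by means of the probability integral transform. Let $F$ denote the CDF of $f(h)$ under $\mathbb{P}_H$; for clarity I take it to be continuous, with the discontinuous case handled routinely via generalized inverses. Then the violation probability is $V(g^*_{N,k}) := \mathbb{P}_h(f(h) > g^*_{N,k}) = 1 - F(g^*_{N,k})$, and the variables $U_i := F(f(h_i))$ are i.i.d.\ Uniform$[0,1]$. The decisive observation is that for \emph{any} admissible $A$, $\max_{i \notin A} f(h_i) \ge f_{(N-k)}$, where $f_{(1)} \le \cdots \le f_{(N)}$ denotes the ordered samples, since even the ``greedy'' rule that discards the top-$k$ samples still leaves $f_{(N-k)}$ as the retained maximum. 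Applying the monotone $F$ yields $F(g^*_{N,k}) \ge U_{(N-k)}$, hence $\{V(g^*_{N,k}) > \epsilon\} \subseteq \{U_{(N-k)} < 1-\epsilon\}$.

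It then remains to bound $\mathbb{P}(U_{(N-k)} < 1-\epsilon)$. This event is equivalent to at most $k$ of the $N$ uniforms exceeding $1-\epsilon$; since each $U_i$ exceeds $1-\epsilon$ independently with probability $\epsilon$, the count is $\text{Binomial}(N,\epsilon)$, whence $\mathbb{P}(U_{(N-k)} < 1-\epsilon) = \sum_{i=0}^{k} \binom{N}{i} \epsilon^i (1-\epsilon)^{N-i}$. Combining with the hypothesis \eqref{eq:parameters_relationship} gives $\mathbb{P}(V(g^*_{N,k}) > \epsilon) \le \beta$, which is precisely \eqref{eq:CCP_guarantee}.

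The main obstacle I anticipate is accommodating the full generality of the discarding algorithm $A$, since a priori one might suspect the analysis requires $A$ to remove the top-$k$ constraints. This is circumvented by the worst-case inequality $\max_{i \notin A} f(h_i) \ge f_{(N-k)}$, which cleanly decouples the algorithmic choice from the probabilistic tail bound. A secondary subtlety is handling ties or atoms in $F$, which requires the standard generalized-inverse treatment but introduces no new conceptual ideas.
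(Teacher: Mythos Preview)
Your argument is correct. The order-statistics route is sound: the closed-form optimum $g^*_{N,k}=\max_{i\notin A}f(h_i)$, the worst-case inequality $g^*_{N,k}\ge f_{(N-k)}$, and the binomial tail computation for $\mathbb{P}(U_{(N-k)}<1-\epsilon)$ all check out, and together give exactly the bound \eqref{eq:parameters_relationship}. The atom/tie issue you flag is real but, as you say, standard; one can bypass the probability integral transform entirely by noting that $\{V(g^*_{N,k})>\epsilon\}\subseteq\{f_{(N-k)}\in B_\epsilon\}$ for the downward-closed set $B_\epsilon=\{g:\mathbb{P}(f(h)>g)>\epsilon\}$, and then bounding $\mathbb{P}(f(h)\in B_\epsilon)$ directly.

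This is a genuinely different route from the paper's. The paper does not prove the lemma from first principles; it simply verifies that the CCP \eqref{eq:CCP} fits the convex framework of \citet{campi2011sampling} (domain $\mathbb{R}$ and constraint sets $\{g:f(h)\le g\}$ are convex and closed) and invokes their Theorem~2.1 with $d=1$, which collapses the general bound $\binom{k+d-1}{k}\sum_{i=0}^{k+d-1}\binom{N}{i}\epsilon^i(1-\epsilon)^{N-i}$ to \eqref{eq:parameters_relationship}. Your approach is more elementary and self-contained, and in one respect slightly stronger: Campi--Garatti's theorem requires the discarding rule $A$ to actually violate all $k$ removed constraints at the optimum, whereas your worst-case inequality $g^*_{N,k}\ge f_{(N-k)}$ handles \emph{any} $A$, data-dependent or not. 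It also makes transparent that the mechanism is nothing more than the distribution of the $(N{-}k)$-th order statistic, which is precisely what drives the Beta distribution in \theoremref{theorem:conformal_method}; so your proof already foreshadows the scenario/conformal equivalence that the paper establishes separately. The paper's approach, on the other hand, buys brevity and situates the lemma within the broader scenario-optimization literature where the $d>1$ case is meaningful.
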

\begin{proof}
    \lemmaref{lemma:CCP} is a straightforward application of a scenario-based sampling-and-discarding approach to a CCP as detailed in \citet{campi2011sampling}.
    CCP \eqref{eq:CCP} satisfies the assumptions of \citet{campi2011sampling}, since both the domain of optimization $\mathbb{R}$ and the constraint sets parameterized by $h$, $\{g: f(h) \le g\}$, are convex and closed in $g$. Thus, \lemmaref{lemma:CCP} follows as a special case of Theorem 2.1 in \citet{campi2011sampling}, where $d=1$, $c=1$, $x=g$, $X=G=\mathbb{R}$, $\delta=h$, $\Delta=H$, and $X_\delta=G_h=\{g: f(h) \le g\}$.
\end{proof}

\subsection{Proof of \texorpdfstring{\theoremref{theorem:scenario-based_method}}{Robust Scenario-Based Probabilistic Safety Verification Theorem}}\label{apd:scenario-based_theorem_proof}

\begin{proof}
    Consider the chance-constrained optimization problem (CCP) \eqref{eq:CCP} in \lemmaref{lemma:CCP} directly above, where $h=x$, $H=\safeset$, and $f(h)=f(x)=-\costFunction(\state,0)$. The proposed robust scenario-based probabilistic safety verification method deals with the corresponding sample counterpart (SP) \eqref{eq:SP} in \lemmaref{lemma:CCP} where the constraint elimination algorithm $A$ is to remove all $k$ constraints $f(h_i) \le g$ where $f(h_i)=-\costFunction(\state_i,0) \ge 0$. Thus, the only constraints remaining are $f(h_i) \le g$ where $f(h_i) < 0$. Since we are minimizing $g$, the solution $g^*_{N,k}$ to the SP must be $< 0$; i.e., $0 < -g^*_{N,k}$. Therefore, $\underset{x \in \safeset}{\mathbb{P}} ( \costFunction(\state,0) \le 0 ) \le \underset{x \in \safeset}{\mathbb{P}} ( \costFunction(\state,0) < -g^*_{N,k} )$. Equation \eqref{eq:CCP_guarantee} of \lemmaref{lemma:CCP} then yields: $\underset{x \in \safeset}{\mathbb{P}} ( \costFunction(\state,0) < -g^*_{N,k} ) \le \epsilon \implies \underset{x \in \safeset}{\mathbb{P}} ( \costFunction(\state,0) \le 0 ) \le \epsilon$, where $\forall (\state,\tvar), \costFunction(\state,\tvar) \le V(\state, \tvar)$ from Equation \eqref{eq:valuefunc}, so Equation \eqref{eq:robust_guarantee} of \theoremref{theorem:scenario-based_method} directly follows.
\end{proof}
\section{Conformal Proofs}\label{sec:CP}

\subsection{Proof of \texorpdfstring{\theoremref{theorem:conformal_method}}{Conformal Probabilistic Safety Verification Theorem}}\label{apd:conformal_theorem_proof}

\begin{proof}

\theoremref{theorem:conformal_method} is a straightforward application of the split conformal prediction method detailed in \citet{MAL-101}, where we set the conformal ``input'' $x=x$, ``output'' $y=-\costFunction(x,0)$, ``score function'' $s(x,y)=y=-\costFunction(x,0)$, ``size of the calibration set'' $n=N$, and ``user-chosen error rate'' $\alpha=\frac{k+1}{N+1}$.
The conformal $\hat{q}$ is then computed as the $\frac{\lceil(N+1)(1-\alpha)\rceil}{N}$ quantile of the calibration scores $-\costFunction(x_{1:N},0)$.
The quantile is $\frac{\lceil(N+1)(1-\alpha)\rceil}{N} = \frac{\lceil(N+1)(1-\frac{k+1}{N+1})\rceil}{N} = \frac{\lceil(N+1)(\frac{N-k}{N+1})\rceil}{N} = \frac{N-k}{N}$, where we have defined $k$ in the procedures in \sectionref{sec:scenario-based_method} as the number of scores $-\costFunction(x_i,0) \ge 0$.
Thus, this quantile corresponds precisely to the largest \textit{negative} score, so we know that $\hat{q} < 0$.
Theorem 1 in \citet{MAL-101} then yields:
\begin{align}
    \underset{ \left( x_{1:N},x \right) \in \safeset }{\mathbb{P}} \left( -\costFunction(x,0) \le \hat{q}  \right) &\ge 1-\alpha \nonumber \\
    \underset{ \left( x_{1:N},x \right) \in \safeset }{\mathbb{P}} \left( \costFunction(x,0) \ge -\hat{q}  \right) &\ge 1-\frac{k+1}{N+1} \nonumber \\
    \underset{ \left( x_{1:N},x \right) \in \safeset }{\mathbb{P}} \left( \costFunction(x,0) > 0  \right) &\ge \frac{N-k}{N+1} \label{eq:cp_proof_coverage}
\end{align}
where Equation \eqref{eq:cp_proof_coverage} follows from the line preceding it because if $\costFunction(x,0) \ge -\hat{q}$, then certainly $\costFunction(x,0) > 0$.
This coverage property result is precisely the same as described in \remarkref{remark:CP_coverage}.
Furthermore, Section 3.2 in \citet{MAL-101} yields:
\begin{align}
    \underset{ x \in \safeset }{\mathbb{P}} \left( \costFunction(x,0) > 0  \right) &\sim \text{Beta}(N+1-l,l),\quad l= \lfloor (N+1)\alpha\rfloor \nonumber \\
    \underset{ x \in \safeset }{\mathbb{P}} \left( \costFunction(x,0) > 0  \right) &\sim \text{Beta}(N-k,k+1)
\end{align}
which is precisely the result of \theoremref{theorem:conformal_method}.
\end{proof}

\subsection{Proof that Split Conformal Prediction Reduces to Robust Scenario Optimization}\label{apd:reduction_proof}
Here, we show that split conformal prediction, in full generality, reduces to a robust scenario optimization problem.
We hope that this insight will encourage future research on the close relationship between the highly related but disparate fields of conformal prediction and scenario optimization.
In split conformal prediction, we first define a score function $s(x,y) \in \mathbb{R}$ which is meant to reflect the uncertainty for a model input $x$ and corresponding model output $y$.
Then, we sample an i.i.d. calibration set $(X_1,Y_1),...,(X_n,Y_n)$ and compute $\hat{q}$ as the $\frac{\lceil(n+1)(1-\alpha)\rceil}{n}$ quantile of the calibration scores $s(X_1,Y_1),...,s(X_n,Y_n)$, where $\alpha \in [0, 1]$ is a user-chosen error rate.
For a new i.i.d. sample $X_\text{test}$, we construct a prediction set $C(X_\text{test})=\{y: s(X_\text{test},y) \le \hat{q}\}$.
Theorem 1 in \citet{MAL-101} provides the following coverage property: $\mathbb{P} \left( Y_{\text{test}} \in C(X_\text{test}) \right) \ge 1-\alpha$.
This follows from the more powerful property, first introduced in \citet{pmlr-v25-vovk12}, which we prove reduces to a robust scenario-based result after:
\begin{equation}\label{eq:conditional_coverage}
    \mathbb{P} \left( Y_\text{test} \in C(X_\text{test}) | \{ (X_i, Y_i) \}^n_{i=1} \right) \sim \text{Beta}(n+1-l, l), \quad l=\lfloor (n+1)\alpha \rfloor
\end{equation}

\begin{proof}
    
To show that split conformal prediction reduces to a scenario-based approach, consider the CCP \eqref{eq:CCP} in \lemmaref{lemma:CCP} in \appendixref{apd:SO}, where $h=(x,y)$ and $f(h)=f\left((x,y)\right)=s(x,y)$.
That is, we want to find a probabilistic upper-bound on samples of the score function.
Then, consider the corresponding SP \eqref{eq:SP} in \lemmaref{lemma:CCP} where the $n$ sampled calibration scores $s(X_1, Y_1), ..., s(X_n, Y_n)$ forms our set of constraints.
Remove the $k=\lfloor (n+1)\alpha-1 \rfloor$ largest scores, where $\alpha$ is the user-chosen error rate in split conformal prediction.
The largest remaining score will be the $\frac{n-k}{n}$ quantile.
$\frac{n-k}{n}=\frac{n-\lfloor (n+1)\alpha-1 \rfloor}{n}=\frac{n+\lceil -((n+1)\alpha-1) \rceil}{n}=\frac{\lceil n -(n+1)\alpha + 1 \rceil}{n}=\frac{\lceil (n+1)(1-\alpha) \rceil}{n}$, which is precisely the same quantile as $\hat{q}$ in split conformal prediction. Thus, the solution to the SP is $g^*_{N,k}=\hat{q}$. \lemmaref{lemma:CCP} tells us that for a violation parameter $\epsilon \in (0, 1)$ and a confidence parameter $\beta \in (0, 1)$ that satisfies the relationship in Equation \eqref{eq:parameters_relationship}, with probability at least $1-\beta$, the following holds:
\begin{equation}\label{eq:scenario_based_conformal_prediction}
    \mathbb{P}\left( Y_\text{test} \in C(X_\text{test}) | \{ (X_i, Y_i) \}^n_{i=1} \right) \ge 1-\epsilon
\end{equation}

This is equivalent to Equation \eqref{eq:conditional_coverage}. To see why, note that the cumulative distribution function of the Beta distribution in Equation \eqref{eq:conditional_coverage} is given in terms of $k$ by the incomplete beta function ratio $I_x(n-k, k+1)=\sum^n_{j=n-k}\binom{n}{j}x^j(1-x)^{n-j}$ from \citet[\href{https://dlmf.nist.gov/8.17.E5}{(8.17.5)}]{NIST:DLMF}.
Changing the index $i=n-j$ yields $I_x(n-k, k+1)=\sum^k_{i=0}\binom{n}{n-i}x^{n-i}(1-x)^{n-(n-i)}=\sum^k_{i=0}\binom{n}{i}x^{n-i}(1-x)^{i}$.
Thus, Equation \eqref{eq:conditional_coverage} is equivalent to the claim that for any violation parameter $\epsilon \in (0, 1)$ and confidence parameter $\beta \in (0, 1)$, $\mathbb{P}\left( Y_\text{test} \in C(X_\text{test}) | \{ (X_i, Y_i) \}^n_{i=1} \right) \ge 1-\epsilon$ (Equation \eqref{eq:scenario_based_conformal_prediction}) holds with probability at least $1-\beta$ as long as $\beta \ge I_{1-\epsilon}(n-k, k+1)=\sum^k_{i=0}\binom{n}{i}\epsilon^{i}(1-\epsilon)^{n-i}$ (Equation \eqref{eq:parameters_relationship}).

\end{proof}

\subsection{Proof of \texorpdfstring{\lemmaref{lemma:conformal_method}}{Conformal Probabilistic Safety Verification Lemma}}\label{apd:conformal_lemma_proof}

\begin{proof}
    The conformal probabilistic safety verification method in \sectionref{sec:conformal_method} is nothing more than a specific formulation of the general split conformal prediction method in \appendixref{apd:reduction_proof}, which we have proven provides a result that is equivalent to the robust scenario optimization result in \lemmaref{lemma:CCP}.
    The result in \lemmaref{lemma:CCP}, when formulated in the context of the conformal probabilistic safety verification method in \sectionref{sec:conformal_method} and noting that $\forall (\state,\tvar), \costFunction(\state,\tvar) \le V(\state, \tvar)$ from Equation \eqref{eq:valuefunc}, is precisely \lemmaref{lemma:conformal_method}.
\end{proof}

\end{document}